\newtheorem{theorem}{Theorem}
\newcommand\independent{\protect\mathpalette{\protect\independenT}{\perp}}
\def\independenT#1#2{\mathrel{\rlap{$#1#2$}\mkern2mu{#1#2}}}
\newcommand{\Image}{I}
\newcommand{\Lab}{L}
\newcommand{\lab}{l}
\newcommand{\Labset}{\mathcal L}
\newcommand{\Signal}{S}
\newcommand{\confounder}{b}
\newcommand{\Confounder}{B}
\newcommand{\Confounderset}{\mathcal B}
\newcommand{\ConfounderTrue}{B^{\ast}}
\newcommand{\Representation}{R}
\newcommand{\Prediction}{R}
\newcommand{\prediction}{r}
\newcommand{\Predictionset}{\mathcal R}
\begin{document}

\title{Towards Learning an Unbiased Classifier from Biased Data via Coditional
    Adversarial Debiasing}

\author{Christian Reimers${}^{0, 1, 2}$,\quad Paul Bodesheim$^1$, \quad
    Jakob Runge$^{2,3}$ \quad and \quad Joachim Denzler$^{1,2}$\\ 
    \begin{tabular}{ccc} 
        $^1$Computer Vision Group, & $^2$Institute of Data Science, & \\
        Friedrich Schiller University Jena, & German Areospace Center (DLR), & $^3$Technische Universit\"at Berlin \\
        07743 Jena, Gremany & 07745 Jena, Germany & 10623 Berlin, Germany
    \end{tabular}
}

\twocolumn[
    \begin{@twocolumnfalse}
        \maketitle

        \begin{abstract}
            Bias in classifiers is a severe issue of modern deep learning methods, 
            especially for their application in safety- and security-critical areas. 
            Often, the bias of a classifier is a direct consequence of a bias in the 
            training dataset, frequently caused by the co-occurrence of relevant 
            features and irrelevant ones. To mitigate this issue, we require learning 
            algorithms that prevent the propagation of bias from the dataset into the 
            classifier. We present a novel adversarial debiasing method, which addresses 
            a feature that is spuriously connected to the 
            labels of training images but statistically independent of the 
            labels for test images. Thus, the automatic identification of relevant 
            features during training is perturbed by irrelevant  features. 
            This is the case in a wide range of bias-related problems for many 
            computer vision tasks, such as automatic skin cancer detection or driver 
            assistance.
            We argue by a mathematical proof that our approach is superior to existing 
            techniques for the abovementioned bias. 
            Our experiments show that our approach performs better than 
            state-of-the-art techniques on a well-known benchmark dataset with 
            real-world images of cats and dogs.
        \end{abstract}
        \vspace{4em}
    \end{@twocolumnfalse}
]

\footnotetext[0]{Corresponding Author (christian.reimers@uni-jena.de)}
\section{Introduction}
    Deep neural networks have demonstrated impressive performances in many areas. 
    These areas encompass not only classical computer vision tasks, like object 
    detection or semantic segmentation, but also safety- and security-critical 
    tasks, such as skin cancer detection \cite{perez2018data} or predicting 
    recidivism \cite{angwin2016machine}. However, many people, including domain 
    experts, advise against employing deep learning in those 
    applications, even if these classifiers outperform human experts, for 
    example, in skin lesion classification \cite{tschandl2019comparison}. 
    One reason for their concerns is bias in the classifiers. 
    Indeed, almost all image datasets contain some kind of bias 
    \cite{wang2020revise} and, consequently, the 
    performance of classifiers varies significantly across subgroups. 
    For example, the skin lesion classification performance varies
    across age groups \cite{muckatira2020properties}, and recidivism prediction 
    is biased against ethnic groups \cite{angwin2016machine}.
    
    \begin{figure}
        \includegraphics[width=\linewidth]{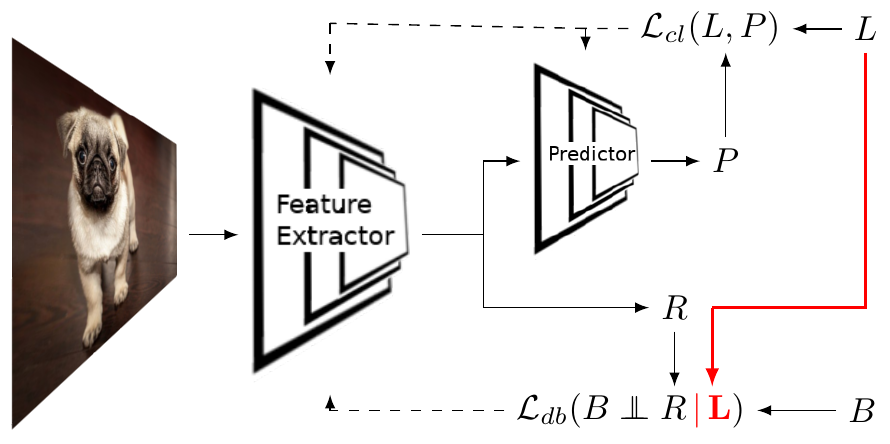}
        \caption{
            In adversarial debiasing, a debiasing loss $\mathcal L_{db}$ is 
            often used to enforce independence between the bias variable 
            $B$ and a representation $R$. In this work, we show that it is 
            beneficial to condition this independence on the label $L$.}  
        \label{fig:teaser}
    \end{figure}

    One major reason for bias in classifiers is dataset bias. Every dataset is a
    unique slice through the visual world \cite{torralba2011unbiased}. 
    Therefore, an image dataset often does not represent the real world 
    perfectly but contains spurious dependencies between meaningless 
    features and the labels of its samples. This spurious connection can be 
    caused by incautious data collection or by 
    justified concerns. If, for example, the acquirement of particular examples 
    is dangerous, these examples might be left out of a dataset due to justified 
    safety concerns.
    A classifier trained on such a dataset might pick the spuriously
    dependent feature to predict the label and is, thus, biased.
    In order to mitigate such a bias, it is important to understand the nature of 
    the spurious dependence. Therefore, we start our investigation at the 
    data generation process. We provide a formal description of the data
    generation model for a common computer vision bias in Section~\ref{sec:bias}. 
    In contrast to other approaches that do not provide a model for the data 
    generation process and, hence, rely solely on empirical evaluations, this 
    allows us to investigate our proposed method theoretically. We discuss 
    the resulting differences to related work in Section~\ref{sec:related}.
    Additionally, our model provides a simple way for practitioners to determine 
    whether our solution applies to a specific problem.

    The main contribution of our work is a novel adversarial debiasing strategy. 
    The basic concept of adversarial 
    debiasing and the idea of our improvement can be observed in 
    Figure~\ref{fig:teaser}. For adversarial debiasing, a second loss $\mathcal L_{db}$ is used in 
    addition to the regular training loss $\mathcal L_{cl}$ of a neural network classifier.
    This second loss penalizes the dependence between the bias variable 
    $\Confounder$  and an intermediate representation $\Representation$ from the 
    neural network. The main difference we propose in this paper is replacing 
    this dependence $\Confounder \not\independent \Representation$ by the 
    conditional dependence $\Confounder \not\independent \Representation \,|\, 
    \Lab$ with $\Lab$ being the label. In fact, it turns out that this conditional dependence is better 
    suited than the unconditional dependence for the considered kind of bias. 
    The motivation for this replacement can be found in 
    Section~\ref{sec:theory}. 
    Even more important, the formal description of the data generation model 
    allows us to provide a rigorous mathematical proof for the linear case in 
    Section~\ref{sec:theory}, which can also be extended to the non-linear case.
    This proof demonstrates that our new approach fits the specific bias well. 
    
    To use our new conditional independence criterion for adversarial debiasing,
    we have to implement it as a differentiable loss. We provide three possible
    implementations in Section~\ref{sec:implementation}. To this end, we extend
    existing ideas from 
    \cite{perez2017fair, kim2019learning, Li2019kernel, adeli2019bias} 
    for implementing the unconditional independence criterion and provide 
    realizations for their conditional counterparts. We demonstrate that these 
    new loss functions lead to larger accuracies on unbiased test sets. 
    In Section~\ref{sec:synthetic}, we provide results of experiments on a 
    synthetic dataset that maximizes the difference between the conditional and 
    the unconditional dependence. Further, in Section~\ref{sec:real}, we present 
    results on a dataset with real-world images of cats and dogs that is used by 
    previous work to evaluate adversarial debiasing. These 
    experiments show that our new approach outperforms existing methods on both 
    synthetic and real-world data. 
    Further experiments shown in Section~\ref{sec:ablation} indicate that the proposed change of the criterion 
    causes the increasing accuracies. 


\section{Related work}\label{sec:related}
    The goal of debiasing is to prevent a classifier from using biased features. 
    To reach this goal, we first have to choose a criterion to determine whether 
    the classifier uses a feature. Second, we have to turn this criterion into a
    differentiable loss. In this section, we compare our choices to related work 
    from the literature.

    First, we compare our criterion for determining whether the classifier uses 
    a feature. Traditionally, adversarial debiasing aims to learn a feature 
    representation that is informative for a task but independent of 
    the bias. 
    Hence, a second neural network that should predict the bias from the feature 
    representation is introduced to enforce this independence. The 
    original network for classification and this second network are then trained 
    in an adversarial fashion. 
    To this end, different loss functions for the 
    original network are suggested to decrease the performance of the second 
    network for predicting the bias. 
    In 
    \cite{alvi2018turning}, the authors 
    minimize the cross-entropy between bias prediction and a uniform 
    distribution. The mean squared error between the reconstruction and the bias 
    is used in \cite{zhang2018mitigating}. The authors of \cite{kim2019learning} 
    maximize the cross-entropy between the predicted distribution of the bias 
    and the bias variable. Additionally, they maximize the entropy of the 
    distribution of the predicted bias. In \cite{adeli2019bias}, the authors 
    minimize the correlation between the ground-truth bias and the prediction of 
    the bias. However, as demonstrated in \cite{reimers2020determining}, 
    independence is too restrictive as a criterion for determining whether a 
    deep neural network uses a certain feature. This fact is also reflected in 
    the experimental results of the abovementioned papers. The resulting 
    classifiers are less biased, but this often leads to decreasing performance 
    on unbiased test sets. For example, \cite{alvi2018turning} report 
    significantly less bias in an age classifier trained on a dataset biased by 
    gender but the performance on the unbiased test set drops from $0.789$ to 
    $0.781$. Our work is fundamentally different. Instead of a different loss, 
    we suggest a different criterion to determine whether a neural network uses 
    a feature. We use the conditional independence criterion proposed by
    \cite{reimers2020determining} rather than independence between the
    representation and the bias.

    To turn the chosen criterion, in our case, conditional independence, into
    a differentiable loss, we extend three ideas from the literature. We build 
    on work of \cite{perez2017fair} and \cite{Li2019kernel}, which use the
    Hilbert-Schmidt independence criterion (HSIC) \cite{gretton2008kernel} as well as
    on the ideas of using mutual information presented in \cite{kim2019learning} or the 
    predictability criterion presented by \cite{adeli2019bias}. All three 
    criteria are unconditional. Our work extends them to conditional 
    independence criteria.

    For understanding deep neural networks, \cite{reimers2020determining} 
    demonstrate that conditional dependence is a sharper criterion than 
    unconditional dependence. For adversarial domain adaptation, 
    \cite{wang2020classes} show significant improvements of conditional 
    adversarial losses compared to unconditional adversarial losses. However, 
    our work is the first one that makes use of these advantages in adversarial 
    debiasing.

    The authors of \cite{rieger2020interpretations} use contextual decomposition 
    \cite{murdoch2018beyond} to force a neural network to focus on useful
    areas of an image. In contrast, we use the approach of 
    \cite{reimers2020determining}, which can not only be applied to image areas 
    but arbitrary features. 

    While the vast majority of adversarial debiasing methods acknowledge that
    bias has many forms, they rarely link the suggested solutions to the 
    processes that generate the biased data. Instead, they rely exclusively on 
    empirical evaluations. In contrast, we provide a specific model for a 
    specific kind of bias as well as a theoretical proof that our approach is 
    better suited for this case.

\section{Proposed debiasing approach}\label{sec:method}
    In this section, we introduce and motivate our novel approach to adversarial
    debiasing. First, in Section~\ref{sec:bias}, we define a model for the bias 
    we consider in this work. Even though the bias model is quite specific, it 
    covers many relevant cases in computer vision. We corroborate this claim 
    with two examples at the end of Section~\ref{sec:bias}. Afterward, we 
    introduce our novel adversarial debiasing criterion. 
    Section~\ref{sec:theory}  provides a theoretical motivation and a mathematical 
    proof that the new criterion fits our specific bias model better than 
    existing solutions from the literature.
    Finally, we provide three possible implementations for loss functions that realize this 
    criterion in Section~\ref{sec:implementation}.

    \subsection{Bias model}\label{sec:bias}
        \begin{figure}[t]
            \centering
            \includegraphics{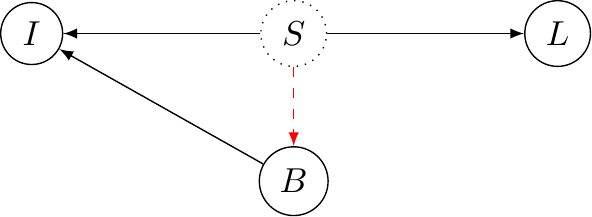}
            \caption{A graphical representation of the specific bias. Circles 
            represent variables,  dotted circles represent unobserved variables. 
            The label $\Lab$ is only dependent on a signal $\Signal$, while the
            input $\Image$ is also dependent on some 
            variable $\Confounder$. In the training set, the signal $\Signal$ 
            influences the variable $\Confounder$ due to bias. This is indicated by the 
            red dashed arrow.}
            \label{fig:bias}
        \end{figure}
        Many different kinds of bias exist and influence visual datasets in 
        various ways \cite{wang2020revise}. In this work, we consider a specific 
        kind of bias. We will later argue that this specific model covers many 
        relevant tasks in computer vision.


        To describe the bias model, we start with a graphical model of the 
        underlying data generation process displayed in Figure~\ref{fig:bias}.
        For classification tasks, like separating cats 
        from dogs, we assume that a process following this graphical model 
        generates the label $\Lab$ (cat or dog) from a signal $\Signal$. This 
        signal $\Signal$ is contained in and can be extracted from the input
        $\Image$. 
        However, the input $\Image$ is a mixture of multiple signals: Besides 
        $\Signal$, another signal $\Confounder$ influences $\Image$. In the 
        cat/dog example, $\Confounder$ might relate to the fur’s color. 
        Since the fur’s color is not meaningful in distinguishing cats and dogs, 
        $\Confounder$ is independent of $\Signal$ and $\Lab$ 
        during the application of the machine learning method in practice, i.e.,
        on an unbiased test set, 
        \begin{equation}
            \textup{Test:} \quad \Confounder \independent \Lab .
        \end{equation}
        In contrast, in a biased training set, we find an unwanted dependence 
        between the signal $\Signal$ and the signal
        $\Confounder$. Hence, we call $\Confounder$ the bias variable in this case.
        The training dataset in the cat/dog example might contain only images of 
        dogs with bright fur and images of cats with dark fur, leading to
        \begin{equation} 
            \textup{Training:} \quad \Confounder \not\independent \Lab .
        \end{equation}
        This dependence can be utilized by a machine-learning algorithm to 
        predict $\Lab$ using $\Confounder$, resulting in a biased classifier. 

        To better understand the direction of the arrow from $\Signal$ to 
        $\Confounder$, we want to emphasize, that data for a task is selected 
        with a purpose. Images are included in the dataset because they show cats or dogs and 
        one will, if necessary, deliberately accept imbalances in variables like 
        fur-color. In contrast, if one find that our dataset misrepresents 
        fur-color one would never accept a major misrepresentation of the ratio 
        of cats and dogs to compensate for this problem. This demonstrates, that
        $\Signal$ influences $\Confounder$ through the dataset creation while 
        $\Confounder$ does not influence $\Signal$.


        This bias model covers many relevant situations in computer vision. In 
        the following, we give two examples where the bias model fits the data 
        and one where it does not. 
        
        The first example is a driver assistance system that uses a camera to 
        estimate aquaplaning risk \cite{hartmann2018aquaplaning}. To train such 
        a system, a dataset is needed that contains images of safe conditions 
        and aquaplaning conditions. While the images of safe conditions can be 
        easily collected in the wild, it is dangerous to drive a car under 
        aquaplaning conditions. Therefore, the images of aquaplaning conditions 
        must be collected in a specific facility. In this example, the signal 
        $\Signal$ is the standing water, and the bias variable $\Confounder$ is 
        the location that determines the background of the image. Because of 
        this safety risk, they are dependent in the training set, but not at the 
        time of application. 
        
        The second example is automatic classification of skin 
        lesions from images \cite{tschandl2018ham10000}. The classification 
        systems are trained on images taken by dermatologists. Since the growth 
        of the skin lesion is informative for skin lesion classification, 
        dermatologists sometimes draw a scale next to the skin lesion if they 
        suspect it is malignant. In this example, the characteristics of the 
        skin lesion form the signal $\Signal$, while the drawn scale is the bias 
        variable $\Confounder$. These are dependent in the training set, but 
        this bias is not present in the application. 
        
        The third example is the one, where the bias model does not fit the 
        data. The example is a system that predicts absenteeism 
        in the workplace, for example \cite{ali2020enhanced}. If an automated 
        system predicts absenteeism, it might be unfair to women because of 
        pregnancy. And we might want a system that does not take this effect 
        into account. Here, the bias variable $\Confounder$ is the sex and the 
        signal $\Signal$ is the time an employee will be absent from work. Here, 
        our bias model does not fit because the data contain a $\Confounder$ to 
        $\Lab$ link.

    \subsection{Conditional independence for debiasing}\label{sec:theory}
        Deep neural networks unite a feature extractor and a predictor
        \cite{reimers2020deep}. For adversarial debiasing, we separate the two 
        at some intermediate layer. We denote the output of the feature 
        extractor $\Representation$. 
        Note that it is a valid approach to use the whole network for 
        feature extraction. In this case, $\Representation$ is the 
        prediction of the neural network. Both networks are 
        trained using a classification loss $\mathcal L_{cl}$, e.g., cross-entropy loss. 
        Additionally, a debiasing loss $\mathcal L_{db}$ is used to prevent the extraction of the 
        bias variable $\Confounder$. 
        For a visualization, see Figure~\ref{fig:teaser}. 
        Most approaches for adversarial debiasing
        \cite{alvi2018turning, zhang2018mitigating, adeli2019bias, 
        kim2019learning} aim to find a representation $\Representation$ of $\Image$
        that is independent of the bias variable $\Confounder$ while still 
        being informative for the label $\Lab$, i.e.,
        \begin{equation}\label{eqn:ad_old}
            \Representation \independent \Confounder \quad\land\quad  \Representation 
                \not\independent \Lab.
        \end{equation}
        In this work, we propose a novel strategy: Instead of 
        independence, we aim for conditional independence of $\Representation$ 
        and $\Confounder$, given the label $\Lab$, i.e.,
        \begin{equation}\label{eqn:ad_new}
            \Representation \independent \Confounder \, |\, \Lab \quad\land\quad 
                \Representation \not\independent \Lab.
        \end{equation}
        Our strategy is better suited for the specific bias model presented in 
        Section~\ref{sec:bias}. In this section, we show that our strategy 
        agrees with state-of-the-art results in explaining deep neural networks 
        \cite{reimers2020determining}. An optimal classifier fulfills the 
        conditional independence \eqref{eqn:ad_new} but not the independence 
        \eqref{eqn:ad_old}. We prove this statement for the case that all 
        data generation processes are linear. 
        Consequently, loss functions that enforce the independence
        \eqref{eqn:ad_old} will decrease the classifier's performance, 
        while loss functions that ensure the conditional independence 
        \eqref{eqn:ad_new} will not. 

        The goal of debiasing is to prevent a deep neural network from using a 
        biased feature. To reach this goal, we first need to determine whether a 
        classifier uses a feature. 
        So far, most approaches for adversarial debiasing use the dependence 
        between a feature and the classifier's prediction to measure whether a 
        classifier is using a feature. In contrast, we build on previous work 
        for understanding deep neural networks \cite{reimers2020determining}.
        While the independence criterion \eqref{eqn:ad_old} obviously ensures 
        that a bias variable $\Confounder$ is not used for
        classification, the authors of \cite{reimers2020determining} reveal that 
        independence is too restrictive to determine whether a deep neural 
        network uses a certain feature. They employ the framework 
        of causal inference \cite{pearl2009causality} to show that the 
        ground-truth labels are a confounding variable for features of the input 
        and the predictions of a deep neural network. In theoretical 
        considerations and empirical experiments, they further demonstrate that 
        the prediction of a neural network and a feature of the input can be 
        dependent even though the feature is not used by the deep neural 
        network. The authors, therefore, suggest using the conditional 
        independence \eqref{eqn:ad_new}, which we employ in our method for 
        adversarial debiasing.

        Thus, the independence criterion \eqref{eqn:ad_old} 
        is too strict. Even if the deep neural network ignores the bias, it 
        might not satisfy \eqref{eqn:ad_old} and, hence, not minimize a 
        corresponding loss. Furthermore, minimizing such a loss based on the 
        independence criterion will likely result in a less accurate classifier. 
        To corroborate this claim, we present a mathematical proof for the 
        following statement. If the bias can be modeled as explained in 
        Section~\ref{sec:bias}, the optimal classifier, which recovers the 
        signal and calculates the correct 
        label for every input image, fulfills the conditional independence 
        \eqref{eqn:ad_new} but not the independence \eqref{eqn:ad_old}. 
        In this work, we only include the proof for the linear case, i.e., all 
        data generating processes are linear.
        However, this proof can further be extended to the non-linear case by 
        using a kernel space in which the data generation processes are linear 
        and replacing covariances with the inner product of that space.
        
        \begin{theorem}
            If the bias can be modeled as described in Section~\ref{sec:bias},
            the optimal classifier fulfills the conditional independence in
            (\ref{eqn:ad_new}) but not the independence in (\ref{eqn:ad_old}).
        \end{theorem}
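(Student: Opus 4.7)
The plan is to exploit the causal structure at training time from Figure~\ref{fig:bias}, where $\Signal \to \Lab$, $\Signal \to \Confounder$ (the biasing arrow), and $(\Signal,\Confounder) \to \Image$. In the linear case every variable is a linear combination of jointly Gaussian primitives, so (conditional) independence is equivalent to vanishing of the (conditional) covariance, which is what makes the claim tractable.

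Concretely, I would fix a minimal linear realisation of the graph: $\Signal \sim \mathcal N(0,\sigma_S^2)$, $\Lab = \alpha\Signal$ with $\alpha \neq 0$, $\Confounder = \beta\Signal + \eta$ with $\eta$ independent of $\Signal$ and $\beta \neq 0$ encoding the training-set bias, and $\Image$ a linear function of $(\Signal,\Confounder)$ from which $\Signal$ is identifiable. The optimal classifier recovers $\Signal$ from $\Image$ and emits the correct label, so $\Representation$ is a deterministic function of $\Lab$; without loss of generality I set $\Representation = \Lab$. The failure of \eqref{eqn:ad_old} then reduces to a one-line covariance computation: $\textup{Cov}(\Representation,\Confounder) = \alpha\,\textup{Cov}(\Signal,\Confounder) = \alpha\beta\sigma_S^2 \neq 0$, hence $\Representation \not\independent \Confounder$. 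For \eqref{eqn:ad_new}, conditioning on $\Lab$ makes $\Representation$ constant, so $\textup{Cov}(\Representation,\Confounder\,|\,\Lab)=0$; in the joint-Gaussian regime this yields $\Representation \independent \Confounder \,|\, \Lab$. Combined with the trivial $\Representation \not\independent \Lab$ (since $\Representation = \Lab$), this establishes the theorem.

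The main obstacle I expect is justifying the step that collapses the optimal $\Representation$ to a deterministic function of $\Lab$ alone. In principle an intermediate feature representation could retain information about $\Signal$ beyond what $\Lab$ reveals, for example if $\Lab$ is a coarser summary of $\Signal$; then conditioning on $\Lab$ no longer collapses $\Representation$ and one needs a genuine d-separation argument that, given a sufficient statistic of $\Signal$, the only open path from $\Representation$ to $\Confounder$ through $\Signal \to \Confounder$ is blocked. In the linear Gaussian case this reduces to the covariance computation above once $\Lab$ injectively encodes $\Signal$; the extension to the nonlinear setting, as the authors indicate, would promote these covariances to conditional cross-covariance operators in a reproducing-kernel Hilbert space in which the data-generating processes become linear.
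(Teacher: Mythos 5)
Your proposal is correct and follows essentially the same route as the paper: the same linear structural equations ($\Confounder = \alpha_1\Signal + \alpha_2\ConfounderTrue$, $\Lab = \zeta_1\Signal$), the same key observation that the optimal classifier outputs $F^\ast(\Image)=\zeta_1\Signal=\Lab$, and the same reduction of (conditional) independence to vanishing (partial) covariance. Your shortcut of noting that $\Representation$ is constant once $\Lab$ is fixed is exactly what the paper's explicit partial-correlation computation collapses to, and your explicit joint-Gaussianity assumption merely makes precise the paper's blanket claim that uncorrelatedness and independence coincide in the linear case.
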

        \begin{proof}
            Throughout this proof, we denote all variables with capital Latin 
            letters. 
            Capital Greek letters denote processes.
            For these processes, we denote the 
            linear coefficients with lower-case Greek letters. 
            The only exception to this is the optimal classifier that is denoted 
            by $F^\ast$.

            We start the proof by defining all functions involved in the model.
            Afterward, since dependence results in correlation in the linear 
            case, a simple calculation proves the claim.
            Let $\Signal$ denote the signal according to the bias model, as 
            explained in Section~\ref{sec:bias}. 
            Since we are in the linear case, the bias variable $\Confounder$ can 
            be split into a part that is fully determined by $\Signal$ and a 
            part that is independent of $\Signal$.

            Let $\ConfounderTrue$ be the part of the bias
            variable that is independent of $\Signal$. The bias variable $\Confounder$
            is given by 
            \begin{equation}
                \Confounder = \alpha_1 \Signal + \alpha_2 \ConfounderTrue
                =: \Phi \left(\Signal, \ConfounderTrue\right).
            \end{equation}
            Further, the label $\Lab$ can be calculated from the signal 
            $\Signal$ 
            \begin{equation}\label{eqn:labzs}
                \Lab = \zeta_1 \Signal =: \Xi \left(\Signal\right)
            \end{equation}
            and the image $\Image$ is given by
            \begin{equation}
                \Image =: \Psi \left(\Signal, \Confounder\right) = \Psi\left(
                    \Signal,  \Phi \left(\Signal, \ConfounderTrue\right)\right).
            \end{equation}
            The optimal solution $F^\ast$ of the machine learning problem will recover the
            signal and calculate the label. By the assumptions of the bias 
            model, the signal can be recovered from the input. Thus, there
            exists a function $\Psi^{\dagger}$ such that
            \begin{equation}
                \Psi^{\dagger} \left(\Psi \left(\Signal, \Confounder\right)
                \right) = \Signal
            \end{equation}
            holds. Therefore, $F^\ast$ is given by
            \begin{equation}
                F^\ast := \Xi \Psi^{\dagger}.
            \end{equation}
            Now, we have defined all functions appearing in the model. The rest 
            of the proof are two straightforward calculations.
            In the linear case, the independence of variables is equivalent to 
            variables being uncorrelated. We denote the covariance of two 
            variables $A, B$ with $\langle A, B \rangle$.
            To prove that \eqref{eqn:ad_old} does not hold, we calculate
            \begin{equation}\label{eqn:corr}
                \begin{split}
                    \left\langle F^\ast (\Image), \Confounder \right\rangle
                    &= \left\langle \Xi \Psi^{\dagger} \Psi \left( \Signal, \Phi \left(
                        \Signal ,\ConfounderTrue \right)\right),
                        \Phi \left(\Signal,  \ConfounderTrue \right) \right\rangle
                    \\ &= \left\langle \zeta_1\Signal, \alpha_1 \Signal + \alpha_2
                        \ConfounderTrue \right\rangle
                    = \zeta_1\alpha_1 \left\langle \Signal, \Signal \right\rangle.
                \end{split}
            \end{equation}
            This is equal to zero if and only if either all inputs contain an 
            identical signal ($\left\langle \Signal, \Signal \right\rangle = 0$), 
            the dataset is unbiased ($\alpha_1 = 0$), or the label does not 
            depend on the signal ($\zeta_1 = 0$).

            For conditional independence, we can use partial correlation and obtain
            that $\left\langle F^\ast(\Image), \Confounder \right\rangle | \Lab$
            equals 
            \begin{equation}
                \left\langle F^\ast(\Image) - \frac{\left\langle F^\ast(\Image), \Lab
                        \right\rangle}{\left\langle\Lab, \Lab
                        \right\rangle}\Lab, \Confounder - \frac{\left\langle
                        \Confounder, \Lab \right\rangle}{\left\langle\Lab, \Lab 
                        \right\rangle} \Lab \right \rangle.
            \end{equation}
            We substitute $\Lab$ by \eqref{eqn:labzs} and use the properties of the
            inner product to arrive at 
            \begin{equation}
                \begin{split}
                    &\left\langle F^\ast(\Image), \Confounder \right\rangle - \frac{\left\langle
                         \zeta_1\Signal, \zeta_1 \Signal\right\rangle\left\langle
                        \zeta_1 \Signal, \Confounder \right\rangle}{\left\langle \zeta_1
                        \Signal, \zeta_1\Signal\right\rangle}
                    \\& = \zeta_1\alpha_1 \left\langle \Signal, \Signal \right\rangle -
                        \frac{\alpha_1\zeta_1^3\left\langle \Signal, \Signal
                        \right\rangle^2}{\zeta_1^2\left\langle \Signal, \Signal
                        \right\rangle} = 0.
                \end{split}
            \end{equation}
            This completes the proof for the linear case. For more detailed 
            calculations see Section~\ref{sec:ape:calc} in the Appendix.
        \end{proof}

        The optimal classifier does not minimize loss criteria based on the 
        independence \eqref{eqn:ad_old}. Further, from 
        \eqref{eqn:corr}, we see that the dependence contains 
        $\zeta_1$, which is the correlation between the signal $\Signal$ and the 
        neural network's prediction. Loss functions based on that criterion aim to 
        reduce this parameter and, hence, will negatively affect the 
        classifier's performance. We demonstrate this effect using a synthetic 
        dataset in Section~\ref{sec:experiments}.         
        In contrast, loss terms
        based on our new criterion \eqref{eqn:ad_new} are minimized by the 
        optimal classifier. Thus, corresponding loss functions do not reduce the 
        accuracy to minimize bias.

    \subsection{Implementation details}\label{sec:implementation}
        In Section~\ref{sec:theory}, we presented two reasons that indicate why 
        our new criterion \eqref{eqn:ad_new} is better suited than the old criterion 
        \eqref{eqn:ad_old} for the bias described in Section~\ref{sec:bias}. 
        In practice, we are faced with the problem of integrating our criterion 
        into the end-to-end learning framework of deep neural networks. As a 
        consequence, we provide three possibilities to realize 
        (\ref{eqn:ad_new}) as a loss function. 

        Turning an independence criterion into a loss function is not 
        straightforward. First, the result of an independence test is binary 
        and, hence, non-differentiable. Second, we need to consider 
        distributions of variables to perform an independence test. 
        However, we only see one mini-batch at a time during the training of a deep 
        neural network. Nevertheless, multiple solutions exist in the 
        unconditional case. 
        In this section, we describe three possible solutions, namely: mutual 
        information~(MI), the Hilbert-Schmidt independence criterion~(HSIC) and 
        the maximum correlation criterion~(MCC). 
        We adapt the corresponding solutions from the unconditional case and 
        extend them to conditional independence criteria.

        The first solution makes use of the mutual information of $\Prediction$ 
        and $\Confounder$ as suggested in \cite{kim2019learning},
        \begin{equation}
            \textup{MI}(\Prediction; \Confounder) = \sum_{\prediction \in \Predictionset,
                \confounder \in \Confounderset} p_{\Prediction, \Confounder}
                (\prediction, \confounder) \log\frac{p_{\Prediction, \Confounder}
                (\prediction, \confounder)}{p_{\Prediction}(\prediction)p_{\Confounder}
                (\confounder)}.
        \end{equation}
        Here, the criterion for independence is $\textup{MI}(\Prediction; 
        \Confounder) = 0$, and the mutual information is the differentiable loss 
        function. However, to evaluate this loss, we must estimate the densities
        $p_{\Prediction, \Confounder}, p_{\Prediction}$ and $p_{\Confounder}$ in every
        step, which is difficult. To mitigate this issue, the authors of
        \cite{kim2019learning} make simplifications to find a bound that
        is tractable for single examples.
        In contrast, we use conditional independence. Our criterion is 
        $\textup{MI}(\Prediction; \Confounder| \Lab) = 0$, and the loss is given 
        by the conditional mutual information $\textup{MI}(\Prediction; 
        \Confounder| \Lab)$
        \begin{equation}
                \sum_{\substack{\lab
                    \in \Labset, \confounder \in \Confounderset\\  \prediction \in
                    \Predictionset}} p_{\Prediction, \Confounder, \Lab}\left(\prediction,
                    \confounder, \lab\right) \log\frac{p_{\Lab}(\lab)p_{\Prediction,
                    \Confounder, \Lab}\left(\prediction, \confounder, \lab\right)}{
                    p_{\Prediction, \Lab}(\prediction, \lab)p_{\Confounder, \Lab)}}.
        \end{equation}
        We use kernel density estimation on the mini-batches to determine 
        the densities and employ a Gaussian kernel with a 
        variance of one-fourth of the average pairwise distance within a 
        mini-batch.
        This setting proved best in preliminary experiments on 
        reconstructing densities.

        As a second solution, we extend the Hilbert Schmidt independence criterion
        \cite{gretton2008kernel}
        \begin{equation}
            \textup{HSIC}(\Prediction, \Confounder) = \frac{1}{(m - 1)^2}
                \textup{tr}\,K_{\Prediction}HK_{\Confounder}H.
        \end{equation}
        Here, $K_{\Prediction}$ and $K_{\Confounder}$ denote the kernel matrices for
        $\Prediction$ and $\Confounder$, respectively. For the Kronecker-Delta 
        $\delta_{ij}$ and $m$ the number of examples, $H$ is given 
        by $H_{ij} = \delta_{ij} - m^{-2}$.
        The variables are independent if and only if
        $\textup{HSIC}(\Prediction, \Confounder) = 0$ holds for a sufficiently large
        kernel space. The HSIC was suggested for classical machine learning methods by
        \cite{perez2017fair} and \cite{Li2019kernel}. 
        Since we aim for conditional independence rather than independence, we use the
        conditional independence
        criterion \cite{fukumizu2008kernel}
        \begin{equation}
            \textup{tr} \, G_\Prediction S_\Lab G_\Confounder S_\Lab = 0.
        \end{equation}
        Here, for $X \in \{\Confounder,\Prediction,\Lab\}$, we use $G_X = HK_XH$ 
        and $S_\Lab = \left(\mathbb I + 1/m G_\Lab\right)^{-1}$ with the identity matrix $\mathbb I$. 
        For the relation to HSIC
        and further explanations, we refer to \cite{fukumizu2008kernel}. 
        We use the same kernel as above and estimate the loss on 
        every mini-batch independently.
        
        The third idea we extend is the predictability criterion from \cite{adeli2019bias}
        \begin{equation}\label{eqn:adeli}
            \max_{f} \quad \textup{Corr}(f(\Prediction), \Confounder) = 0.
        \end{equation}
        To use this criterion within a loss function, they parametrize $f$ by a 
        neural network. However, this is not an independence criterion as it can 
        be equal to zero, even if $\Prediction$ and $\Confounder$ are dependent. 
        Therefore, it is unclear how to incorporate the conditioning on 
        $\Lab$. As a consequence, we decided to extend the proposed criterion in 
        two ways. First, we use the maximum correlation coefficient (MCC)
       \begin{equation}\label{eqn:mcc}
           \textup{MCC}(\Prediction, \Confounder) = \max_{f,g} \quad \textup{Corr}(f(\Prediction), g(\Confounder)) = 0\quad,
       \end{equation}
        which is equal to zero if and only if the two variables are independent
        \cite{sarmanov1958maximum}. Second, we use the partial correlation 
        conditioned on the label $\Lab$, which leads to
        \begin{equation}\label{eqn:ours_mcc}
            \max_{f,g} \quad \textup{PC}(f(\Prediction), 
                g(\Confounder)\,|\,\Lab) = 0.
        \end{equation}
        To parameterize both functions $f$ and $g$, we use neural networks.
        The individual effects of the two extensions can be observed through our 
        ablation study in Section~\ref{sec:ablation}.

        Note that all three of these implementations can be used for 
        vector-valued variables. Therefore, they can also be used for multiple 
        bias variables in parallel.

\section{Experiments and results}\label{sec:experiments}
    This section contains empirical results that confirm our theoretical
    claims and form the third reason for the suitability of our proposed method.
    To this end, we first present
    experiments on a synthetic dataset that is designed to maximize the 
    difference between the independence criterion \eqref{eqn:ad_old} and the 
    conditional independence criterion \eqref{eqn:ad_new}. Afterward, we report 
    the results of an ablation study demonstrating that the gain in performance 
    can be credited to the change of the independence criterion. Finally,
    we show that our findings also apply to a real-world dataset. For this 
    purpose, we present experiments on different biased subsets of the cats and 
    dogs dataset presented in \cite{lakkaraju2016discovering}.

    To evaluate our experiments, we measure the accuracy on an unbiased testset.
    We do this for multiple reasons. First, we designed this method for 
    situations in which a dataset is biased, but we expect the system to be used
    in an unbiased, real-world situation. Hence, the accuracy on an unbiased 
    testset is our goal and evaluating it directly is the most precise measure
    for our method. 
    Nevertheless, our method is also applicable to some situations of 
    algorithmic fairness where we do not have access to an unbiased testset. In
    these situations it is common to use other evaluation methods like the 
    ``equalized odds'' \cite{hardt2016equality}
    \begin{equation}
        \Prediction \independent \Confounder \,|\, \Lab
    \end{equation} 
    or 
    ``demographic parity'' \cite{dwork2012fairness}
    \begin{equation}
        \Prediction \independent \Confounder
    \end{equation}
    both of which are the same in this situation, since the testset is unbiased. 
    The main drawback of these measures is that they are binary and, therefore,
    rather coarse-grained. Hence, we focus on the accuracy on unbiased testsets
    in this paper, but include the evaluations of these fairness criteria in 
    Section~\ref{sec:ape:fair} of the appendix.

\subsection{Synthetic data}\label{sec:synthetic}
    If a feature is independent of the label for a given classification task, the
    independence criterion \eqref{eqn:ad_old} and the conditional independence
    criterion \eqref{eqn:ad_new} agree.
    Since we aim to maximize the difference between the two criteria, we use a
    dataset with a strong dependence between the label $\Lab$ and the variable
    $\Confounder$. 
    We create a dataset of eight-by-eight pixels images that combine two 
    signals. The first signal $\Signal$, determines the
    shape of high-intensity pixels in the image. This shape is either a cross or
    a square, both consisting of the same number of pixels. The second
    signal $\Confounder$ is the color of the image. The hue of all pixels is
    either set to 0.3 (green) or to 0.9 (violet). Afterward, we add noise to the
    hue and the intensity value of every pixel. The noise is sampled from a uniform
    distribution on the interval $[-0.1, 0.1]$.
    Finally, the images are converted to the RGB colorspace. To maximize the
    dependence between the label $\Lab$ and the bias variable $\Confounder$, 
    every training image of a cross is green and every training image of a 
    square is violet. In the test set, these two signals are independent. 
    Example images from the training and test set can be seen in
    Figure~\ref{fig:examples}. We decided to limit the training set to 600
    images for two reasons. First, small datasets are the expected use-case for 
    adversarial debiasing. Second, this limitation increases the 
    difficulty of debiasing.

    \begin{figure*}[t]
        \centering
        \begin{subfigure}{0.5\textwidth}
            \includegraphics[width=\textwidth]{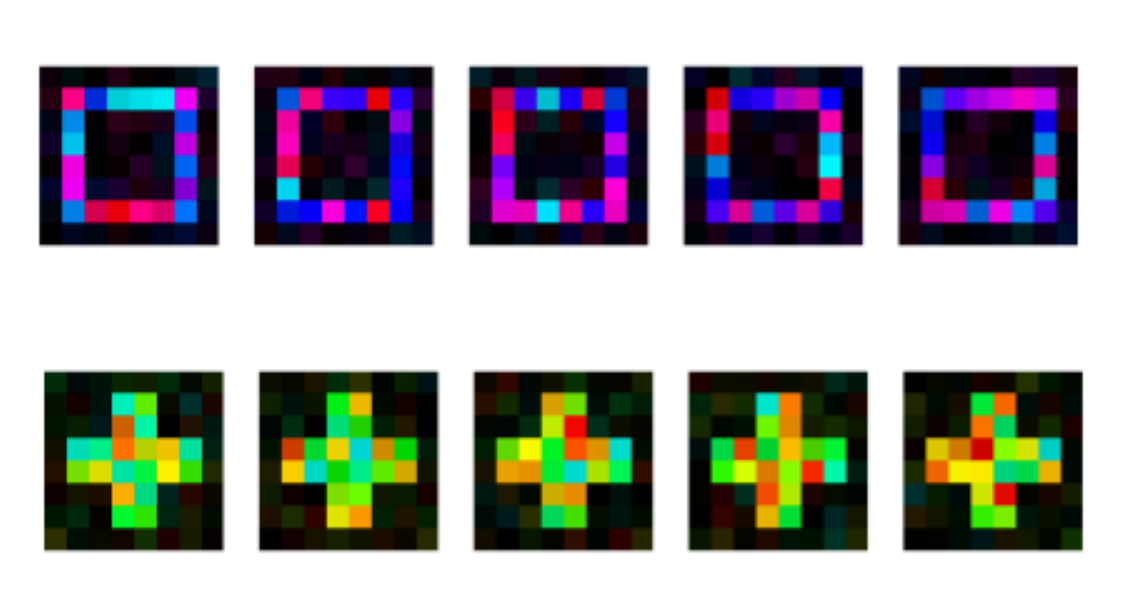}
            \subcaption{Images from the training set}
        \end{subfigure}%
        \begin{subfigure}{0.5\textwidth}
            \includegraphics[width=\textwidth]{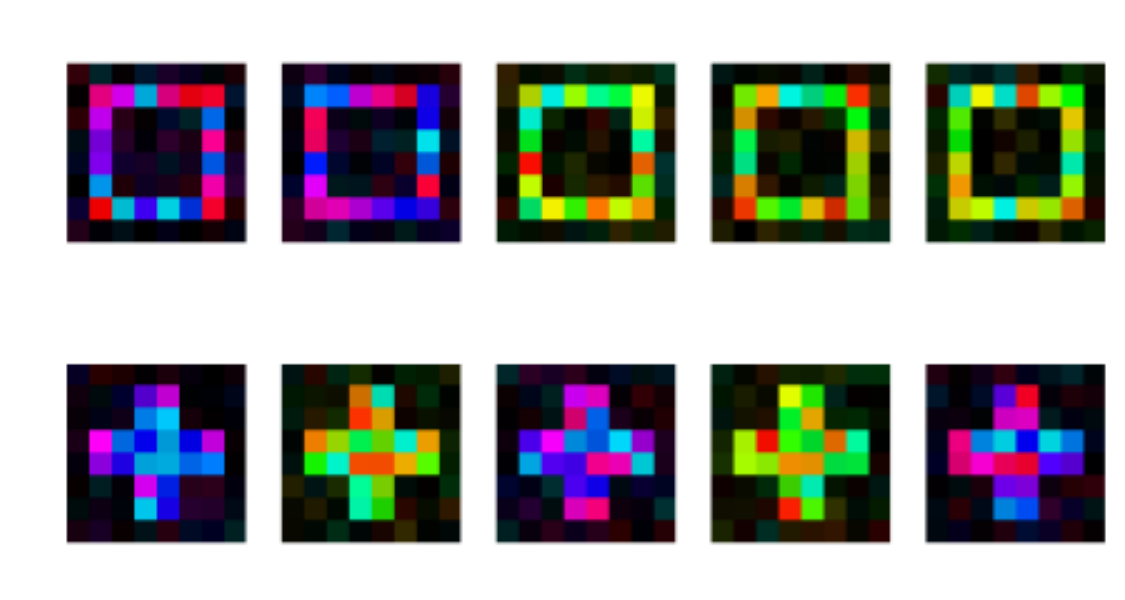}
            \subcaption{Images from the test set}
        \end{subfigure}
        \caption{Example images from the synthetic dataset. In the training set, 
        the color and the shape are dependent. In the test set, the two signals 
        are independent.}
        \label{fig:examples}
    \end{figure*}

    For our first experiment, we use the shape as a categorical label $\Lab$ and
    the color as the bias variable $\Confounder$. For this purpose, we calculate
    the mean color of the image after the noise is applied and use this value as
    the variable $\Confounder$. We report the results of this experiment in
    Table~\ref{tab:results} in the column titled Setup~I.
    To avoid any influence of shape- or color-preference, we also report results 
    for the inverse setting in a second experiment, denoted as Setup~II. For this 
    experiment, we use the color as a categorical label $\Lab$.
    The bias variable $\Confounder$ is then calculated as the difference between 
    the values of pixels in the square shape and those in the cross.

    As the backbone, we use a neural network that first contains two convolutional layers, each
    having 16 filter kernels of size $3{\times}3$. Then, a dense layer follows 
    with 128 hidden neurons. These layers use ReLU activations. 
    Finally, we use a dense layer with two 
    neurons and softmax activation for classification. 
    If a method from the literature uses an intermediate representation, we use
    the representation after the last convolutional layer. Otherwise, 
    we use the prediction of the whole network as the representation $\Prediction$.
    Whenever an additional
    neural network is required by a method, i.e., all literature methods and 
    ours based on MCC, we use a neural network with one
    hidden layer of 1024 neurons. For optimization, we use Adam 
    \cite{kingma2014adam}.
    
    As a baseline, we use the backbone neural network without any debiasing method. We have
    reimplemented four methods from the literature using the descriptions in the respective papers. 
    The corresponding references can be found in Table~\ref{tab:results}. Two methods are 
    proposed in \cite{zhang2018mitigating}. We call the first, which penalizes 
    the predictability of $\Confounder$ from $\Prediction$, 
    \emph{Zhang et al. I}. The second one, which penalizes predictability of
    $\Confounder$ from $\Prediction$ and $\Lab$, is called 
    \emph{Zhang et al. II}. 
    In addition, we report the averaged accuracies for these and all three implementations of our 
    proposed criterion in 
    Table~\ref{tab:results}. 
    
    We have set the hyperparameters for the competing methods to the values 
    found in the corresponding papers. For all unpublished or problem-specific 
    hyperparameters such as learning rates, as well as for all hyperparameters
    for our implementation, we used a grid search.
    To this end, we trained ten neural networks for each combination of 
    different hyperparameters and evaluated them on an unbiased validation set.
    We included the list of the hyperparameters for every method in 
    Section~\ref{sec:ape:hyp} in the appendix.
    For the second setup, we applied the same hyperparameter search. 
    This resulted in different hyperparameters compared to the 
    first setup for all methods, which are also described in
    Section~\ref{sec:ape:hyp} in the appendix.
    
    \begin{table}[t]
        \caption{The results from 100 runs of our method and all baseline methods.
        For both experiments, we report the mean accuracy {\scriptsize$\pm $
        standard error}. Best results are marked in \textbf{bold}}
        \label{tab:results}
        \centering
        \begin{tabular}{lcc}
            \toprule
            Method & Setup~I & Setup~II\\
            \midrule
            Baseline    & $0.819$ \scriptsize{$\pm  0.016$}
                        & $0.791$ \scriptsize{$\pm  0.016$} \\
            \midrule
            Adeli et al. \cite{adeli2019bias}       & $0.747$ \scriptsize{$\pm  0.015$}
                                                    & $0.776$ \scriptsize{$\pm  0.014$}\\
            Zhang et al. I \cite{zhang2018mitigating} & $0.736$ \scriptsize{$\pm  0.018$}
                                                    & $0.837$ \scriptsize{$\pm  0.017$} \\
            Zhang et al. II \cite{zhang2018mitigating}  & $0.747$ \scriptsize{$\pm  0.016$}
                                                        & $0.750$ \scriptsize{$\pm  0.013$} \\
            Kim et al. \cite{kim2019learning}           & $0.771$ \scriptsize{$\pm  0.012$}
                                                        & $0.767$ \scriptsize{$\pm  0.016$}\\
            \midrule
            Ours(MI) & $0.840$ \scriptsize{$\pm  0.014$} & $\textbf{0.871}$ \scriptsize{$\pm  0.012$}\\
            Ours(HSIC) & $0.846$ \scriptsize{$\pm  0.021$} & $0.868$ \scriptsize{$\pm  0.013$}\\
            Ours(MCC) & $\textbf{0.854}$ \scriptsize{$\pm  0.013$} & $0.867$ \scriptsize{$\pm 0.013$}\\
            \bottomrule
        \end{tabular}
    \end{table}

    
    We observe that the baseline reaches a test accuracy of more than 75\% in
    both experiments. Hence, hyperparameter selection influences which feature
    (color or shape) the neural network uses for classifications. Since there is 
    no analytical way to determine the best hyperparameters, this strong impact 
    might obscure the effect of debiasing methods. We tried to prevent this by 
    rigorous hyperparameter optimization and a large number of runs. 

    We see that 
    the difference between the methods from the literature using the independence 
    criterion \eqref{eqn:ad_old} and the methods using our new conditional 
    independence criterion \eqref{eqn:ad_new} is much larger than the 
    differences between methods within these groups. For the first experiment, 
    the worst method using \eqref{eqn:ad_new} performs $5.9$ percentage points 
    better than the best method using \eqref{eqn:ad_old}. The difference between 
    the best and worst methods within these groups is $1.4$ percentage points 
    and $3.5$ percentage points, respectively. In the second setup, the 
    results are similar. Only the method suggested in
    \cite{zhang2018mitigating} performs surprisingly good in this experiment.

    We draw two conclusions from these experiments. First,
    as intended, the synthetic dataset is challenging for all existing methods
    found in the literature. In the first setup, none of them were able to
    improve the results of the baseline. In the second experiment, only one 
    method outperformed the baseline.
    This coincides with observations from
    the literature that adversarial debiasing methods are challenged in
    situations with strong bias and lose accuracy to reduce bias. It
    also agrees with \eqref{eqn:corr}, which we discussed 
    at the end of Section~\ref{sec:theory}. 
    It is important to note that we were able to reach the baseline performance 
    for every method by allowing hyperparameters that deactivate the debiasing, 
    e.g., by setting the weight of the debiasing loss to zero. 
    To avoid this deactivation, we have limited the hyperparameters to the range 
    that is used in the respective publications. Second, we observe that all 
    methods that use our new debiasing criterion \eqref{eqn:ad_new} reach a 
    higher test accuracy than the baseline and, consequently, also a higher 
    accuracy than existing methods.  

\subsection{Ablation study}\label{sec:ablation}
    In the previous experiment, we observed that our new methods reached a 
    higher accuracy than the methods from the literature. To show that this 
    increase in accuracy can be attributed to the conditional independence 
    criterion \eqref{eqn:ad_new}, we conduct an ablation study for the three 
    methods described in Section~\ref{sec:implementation}.
    More specifically, we first report results for a method using unconditional 
    mutual information, calculated in the same way we calculate the conditional 
    mutual information. Second, we present results for a method using the
    unconditional HSIC as a loss. Third, we present two methods that
    investigate the gap between the method presented by 
    Adeli et al.~\cite{adeli2019bias} and our method using the conditional 
    maximal correlation coefficient. The first one uses the unconditional 
    maximum correlation coefficient, and the second one incorporates the partial 
    correlation (PC) instead of the correlation in \eqref{eqn:adeli}. We use the 
    settings and evaluation protocol of Setup~I and Setup~II from the previous 
    section. The results are presented in Table~\ref{tab:ablation}.

    \begin{table}[t]
        \caption{The results of the ablation study. Every method is trained on a
            biased training set and evaluated on an unbiased test set. We report
            the accuracy averaged over 100 runs and the standard error. 
            Best results are marked in \textbf{bold}}
        \label{tab:ablation}
        \centering
        \begin{tabular}{lcc}
            \toprule
            Method & Setup I & Setup II\\
            \midrule
            Unconditional MI & $0.583$ \scriptsize{$\pm 0.010$} & $0.833$\scriptsize{$\pm0.011$}\\
            Conditional MI & $\textbf{0.840}$ \scriptsize{$\pm  0.014$} &  $\textbf{0.871}$\scriptsize{$\pm0.012$}\\
            \midrule
            Unconditional HSIC & $0.744$ \scriptsize{$\pm 0.011$} & $0.590$\scriptsize{$\pm0.011$}\\
            Conditional HSIC & $\textbf{0.846}$ \scriptsize{$\pm  0.021$} & $\textbf{0.868}$\scriptsize{$\pm0.013$}\\
            \midrule
            Adeli et al. \cite{adeli2019bias} & $0.747$ \scriptsize{$\pm  0.015$} & $0.776$\scriptsize{$\pm0.014$} \\
            Ours(MCC) -- {\small only MCC} & $0.757$ \scriptsize{$\pm 0.016$} & $0.807$\scriptsize{$\pm0.015$}\\
            Ours(MCC) -- {\small only PC} & $0.836$ \scriptsize{$\pm 0.014$} & $0.830$\scriptsize{$\pm0.015$}\\
            Ours(MCC) -- {\small complete} & $\textbf{0.854}$ \scriptsize{$\pm  0.013$} & $\textbf{0.867}$\scriptsize{$\pm0.013$}\\
            \bottomrule
        \end{tabular}
    \end{table}

    We find that the unconditional versions of our methods, especially for
    mutual information (``Unconditional MI'') and HSIC (``Unconditional HSIC''), 
    perform worse than almost all methods from the literature (see Setup~I in
    Table~\ref{tab:results}). For the third method, we observe
    that the change from the predictability criterion 
    to the maximum correlation coefficient 
    increases the accuracy by 1.0 and
    1.8 percentage points for the correlation and partial correlation case,
    respectively. In contrast, the change from correlation to partial correlation 
    increases the accuracy by 8.9 percentage points with the
    predictability criterion and by 9.7 percentage points with the maximum
    correlation coefficient. These observations indicate that the improvements 
    that we found in Section~\ref{sec:synthetic} can be
    attributed to the difference between \eqref{eqn:ad_old} and
    \eqref{eqn:ad_new} and not to implementation details. 
    

\subsection{Real-world data}\label{sec:real}

\begin{table*}[th]
    \caption{Experimental results on the cats and dogs dataset. All methods were trained
    on a dataset in which p\% of all dogs are dark-furred dogs and $p$\% of all cats are light-furred.
    The first column of the table indicates the fraction $p$. The following columns
    contain the accuracies on an unbiased test set averaged over three runs. 
    We also report the standard error}
    \label{tab:real}
    \centering
    \begin{tabular}{cccccc}
        \toprule
        Fraction & Baseline& Adeli et al. \cite{adeli2019bias}  & 
            Zhang et al. I \cite{zhang2018mitigating} & Zhang et al. II \cite{zhang2018mitigating} & Ours(HSIC) \\
        \midrule
        0\%             &\textbf{0.627}    \tiny{$\pm 0.004$}
                        & 0.597            \tiny{$\pm 0.004$}
                        & 0.590            \tiny{$\pm 0.002$}
                        & 0.617            \tiny{$\pm 0.001$}
                        & 0.615            \tiny{$\pm 0.005$}
                        \\
        10\%            & 0.800             \tiny{$\pm 0.001$}
                        & 0.774             \tiny{$\pm 0.002$}
                        & 0.779             \tiny{$\pm 0.005$}
                        & 0.785             \tiny{$\pm 0.007$}
                        & \textbf{0.801}    \tiny{$\pm 0.001$}
                        \\
        20\%            & 0.845             \tiny{$\pm 0.003$}
                        & 0.829             \tiny{$\pm 0.000$}
                        & 0.812             \tiny{$\pm 0.002$}
                        & 0.809             \tiny{$\pm 0.005$}
                        & \textbf{0.855}    \tiny{$\pm 0.004$}
                        \\
        30\%            & 0.852             \tiny{$\pm 0.007$}
                        & 0.842             \tiny{$\pm 0.003$}
                        & 0.837             \tiny{$\pm 0.004$}
                        & 0.834             \tiny{$\pm 0.003$}
                        & \textbf{0.863}    \tiny{$\pm 0.002$}
                        \\
        40\%            & 0.859             \tiny{$\pm 0.007$}
                        & 0.855             \tiny{$\pm 0.004$}
                        & 0.870             \tiny{$\pm 0.002$}
                        & 0.850             \tiny{$\pm 0.001$}
                        & \textbf{0.875}    \tiny{$\pm 0.003$}
                        \\
        50\%            & 0.859             \tiny{$\pm 0.006$}
                        & \textbf{0.866}    \tiny{$\pm 0.003$}
                        & 0.856             \tiny{$\pm 0.001$}
                        & 0.853             \tiny{$\pm 0.001$}
                        & 0.860             \tiny{$\pm 0.002$}
                        \\
        60\%            & \textbf{0.866}    \tiny{$\pm 0.006$}
                        & 0.837             \tiny{$\pm 0.001$}
                        & 0.850             \tiny{$\pm 0.003$}
                        & 0.860             \tiny{$\pm 0.004$}
                        & 0.856             \tiny{$\pm 0.005$}
                        \\
        70\%            & 0.844             \tiny{$\pm 0.003$}
                        & 0.854             \tiny{$\pm 0.003$}
                        & 0.835             \tiny{$\pm 0.005$}
                        & 0.841             \tiny{$\pm 0.005$}
                        & \textbf{0.859}    \tiny{$\pm 0.000$}
                        \\
        80\%            & 0.829             \tiny{$\pm 0.002$}
                        & 0.822             \tiny{$\pm 0.005$}
                        & 0.820             \tiny{$\pm 0.005$}
                        & 0.826             \tiny{$\pm 0.003$}
                        & \textbf{0.836}    \tiny{$\pm 0.002$}
                        \\
        90\%            & 0.773             \tiny{$\pm 0.010$}
                        & 0.743             \tiny{$\pm 0.001$}
                        & 0.758             \tiny{$\pm 0.001$}
                        & 0.731             \tiny{$\pm 0.002$}
                        & \textbf{0.791}    \tiny{$\pm 0.004$}
                        \\
        100\%           & 0.612             \tiny{$\pm 0.001$}
                        & 0.612             \tiny{$\pm 0.004$}
                        & 0.604             \tiny{$\pm 0.001$}
                        & 0.609             \tiny{$\pm 0.001$}
                        & \textbf{0.616}    \tiny{$\pm 0.002$}
                        \\
        \bottomrule
    \end{tabular}
\end{table*}

\begin{table}[th]
    \caption{The results of the ablation study on the real data. Our conditional
    HSIC and an unconditional HSIC methods were trained
    on a dataset in which p\% of all dogs are dark-furred dogs and $p$\% of all cats are light-furred.
    The first column of the table indicates the fraction $p$. The following columns
    contain the accuracies on an unbiased test set averaged over three runs. 
    We also report the standard error}
    \label{tab:real:abl}
    \centering
    \begin{tabular}{lccc}
        \toprule
            & Unconditional & & \\
        Fraction & HSIC & HSIC (Ours) & Diff \\
        \midrule
          0\% & $0.611$\scriptsize{$\pm 0.003$} & 0.615\scriptsize{$\pm 0.005$} &  0.004 \\
         10\% & $0.759$\scriptsize{$\pm 0.012$} & 0.801\scriptsize{$\pm 0.001$} &  0.042 \\
         20\% & $0.816$\scriptsize{$\pm 0.002$} & 0.855\scriptsize{$\pm 0.004$} &  0.039 \\
         30\% & $0.834$\scriptsize{$\pm 0.002$} & 0.863\scriptsize{$\pm 0.002$} &  0.029 \\
         40\% & $0.861$\scriptsize{$\pm 0.003$} & 0.875\scriptsize{$\pm 0.003$} &  0.014 \\
         50\% & $0.863$\scriptsize{$\pm 0.004$} & 0.860\scriptsize{$\pm 0.002$} & -0.003 \\
         60\% & $0.844$\scriptsize{$\pm 0.001$} & 0.856\scriptsize{$\pm 0.005$} &  0.012 \\
         70\% & $0.835$\scriptsize{$\pm 0.003$} & 0.859\scriptsize{$\pm 0.000$} &  0.024 \\
         80\% & $0.820$\scriptsize{$\pm 0.007$} & 0.836\scriptsize{$\pm 0.002$} &  0.016 \\
         90\% & $0.757$\scriptsize{$\pm 0.003$} & 0.791\scriptsize{$\pm 0.004$} &  0.034 \\
        100\% & $0.606$\scriptsize{$\pm 0.002$} & 0.616\scriptsize{$\pm 0.002$} &  0.010 \\
        \bottomrule
    \end{tabular}
\end{table}

    After we have demonstrated the effectiveness of our approach on a synthetic
    dataset, we conduct a third experiment to investigate whether the 
    increase in accuracy can be observed in real-world image data as well. Even 
    though the described bias is present in many computer vision tasks, most datasets
    are inadequate for our investigations. To evaluate the performance of our 
    debiasing method, we require an unbiased test set. However, most datasets
    contain the same bias in their training and test sets since both sets are 
    collected in the same way. To mitigate this problem, we consider a 
    dataset, which has labels for multiple signals per image. This allows us to 
    introduce a bias in the training set but not in the test set. To this end, we   
    use the cats and dogs dataset that was used for the same 
    purpose in \cite{lakkaraju2016discovering}.
    
    This dataset contains images of cats and dogs that are
    additionally labeled as dark-furred or light-furred. We first remove 20\% of
    each class/fur combination as an unbiased test set. Then, we
    create eleven training sets with different levels of bias. We start with a
    training set that contains only light-furred dogs and only dark-furred cats.
    For each of the other sets, we increase the fraction of dark-furred dogs and
    light-furred cats by ten percent. Therefore, the last dataset contains only
    dark-furred dogs and light-furred cats. We equalize the size of all these
    training sets to perform a fair evaluation.
    Unfortunately, this limits the training set size to 80\% of the rarest
    class/fur combination. Therefore, the training sets contain only $2{,}469$
    images, which is $14.7\%$ of the original training set.

    As a backbone network, we use a ResNet-18 \cite{he2016deep}. We train the
    network for 150 epochs using the Adam optimizer \cite{kingma2014adam}. The
    learning rate follows a cosine decay with warm restarts
    \cite{loshchilov2016sgdr}. Additionally, we use random cropping during training and
    center cropping during inference \cite{simonyan2014very} as well as a
    progressive resizing scheme. 
    Whenever a method requires an additional neural
    network, we use a network with 
    one hidden layer of 1024 neurons.
    We apply a grid search to optimize hyperparameters for the baseline and
    adapt the hyperparameters of Setup~I from Section~\ref{sec:synthetic} for
    method-specific settings. We use the same implementations as in Section~\ref{sec:synthetic} 
    for the competing approaches and our implementation based on HSIC because it was
    most robust against different hyperparameters among our methods.


    The results reported in Table~\ref{tab:real} are averaged across three runs.
    Since the labels $\Lab$ and the bias variable $\Confounder$ are binary, the
    two signals are indistinguishable for 0\% and 100\% dark-furred dogs, respectively. 
    Furthermore, we obtain an unbiased training set for 50\% dark-furred dogs.
    Our method reaches the highest accuracy in seven out of the remaining eight biased
    scenarios and the highest overall accuracy of 0.875 for 40\% dark-furred
    dogs in the training set. For six out of these seven scenarios, the baseline was outside of our
    method's 95\% confidence interval. We observe that the methods from the
    literature only outperform the baseline in situations with little bias.
    This result supports our finding that these methods are not suited for
    the bias model described in Section~\ref{sec:bias}. 
   
    To further investigate the effectiveness of our method we compare the 
    conditional and unconditional HSIC in this setting as well. The results are
    reported in Table~\ref{tab:real:abl}. They are averaged over three runs.
    We find that the conditional HSIC outperforms the unconditional HSIC in all
    biased scenarios. The stronger the bias, the bigger is the difference between 
    the two methods. The correlation between the bias, measured as the absolute 
    value between the difference of fractions of dark- and light-furred dogs, 
    and the difference in accuracy between the conditional and unconditional
    HSIC method is $0.858$.

    The test set accuracies
    reported here are lower than, for example, reported in
    \cite{kim2019learning}. This difference has two reasons. First, to guarantee
    a fair evaluation, we fixed the size of the training set to be the same for every
    bias level. This constraint reduces the number of images in the training set. Second, to solely focus on the bias in our training sets, 
    we refrain from pretraining on ImageNet \cite{deng2009imagenet} because this dataset already contains
    several thousand images of dogs. Instead, we train our networks from scratch, which leads to a 
    more objective evaluation for the debiasing methods.

\section{Conclusion}\label{sec:conclusion}
    In this work, we investigated a specific kind of bias described in
    Section~\ref{sec:bias}. The exact model formulation allowed us to provide 
    theoretical evidence, including a mathematical proof to confirm our proposed 
    solution. With these findings, our work clearly differs from most related 
    work on adversarial debiasing, which solely relies on empirical evaluations.

    Our experimental results also support our theoretical claims made in this 
    work. If a bias can be modeled as explained in Section~\ref{sec:bias}, a 
    conditional independence criterion is a better choice compared to an 
    unconditional independence criterion. This is the result of the theoretical 
    considerations in Section~\ref{sec:method} and confirmed by the
    experiments in Section~\ref{sec:experiments}, where the difference between 
    the two criteria has been maximized. We further demonstrated that this 
    difference is the reason for the increase in accuracy and that this increase 
    is also observable for real-world data.

    To estimate the effect in unbiased data or in situations where the bias 
    model does not apply, we used the unbiased situation (50\% in 
    Table~\ref{tab:real}) and the situations with only light- or dark-furred 
    dogs (0\% and 100\% in Table~\ref{tab:real}). In these experiment, our 
    method performs slightly worse but similar to the baseline.

\bibliographystyle{acm}
\bibliography{related_work}

\begin{thebibliography}{10}

\bibitem{adeli2019bias}
{\sc Adeli, E., Zhao, Q., Pfefferbaum, A., Sullivan, E.~V., Fei-Fei, L.,
  Niebles, J.~C., and Pohl, K.~M.}
\newblock Representation learning with statistical independence to mitigate
  bias.
\newblock In {\em Proceedings of the IEEE/CVF Winter Conference on Applications
  of Computer Vision\/} (2021), pp.~2513--2523.

\bibitem{ali2020enhanced}
{\sc Ali~Shah, S.~A., Uddin, I., Aziz, F., Ahmad, S., Al-Khasawneh, M.~A., and
  Sharaf, M.}
\newblock An enhanced deep neural network for predicting workplace absenteeism.
\newblock {\em Complexity 2020\/} (2020).

\bibitem{alvi2018turning}
{\sc Alvi, M., Zisserman, A., and Nell{\aa}ker, C.}
\newblock Turning a blind eye: Explicit removal of biases and variation from
  deep neural network embeddings.
\newblock In {\em Proceedings of the European Conference on Computer Vision
  (ECCV)\/} (2018), pp.~0--0.

\bibitem{angwin2016machine}
{\sc Angwin, J., Larson, J., Mattu, S., and Kirchner, L.}
\newblock Machine bias. propublica.
\newblock {\em See \url{https://www.propublica.org/article/machine-bias-risk}
  \url{-assessments-in-criminal-sentencing}\/} (2016).

\bibitem{deng2009imagenet}
{\sc Deng, J., Dong, W., Socher, R., Li, L.-J., Li, K., and Fei-Fei, L.}
\newblock Imagenet: A large-scale hierarchical image database.
\newblock In {\em 2009 IEEE conference on computer vision and pattern
  recognition\/} (2009), Ieee, pp.~248--255.

\bibitem{dwork2012fairness}
{\sc Dwork, C., Hardt, M., Pitassi, T., Reingold, O., and Zemel, R.}
\newblock Fairness through awareness.
\newblock In {\em Proceedings of the 3rd innovations in theoretical computer
  science conference\/} (2012), pp.~214--226.

\bibitem{fukumizu2008kernel}
{\sc Fukumizu, K., Gretton, A., Sun, X., and Sch{\"o}lkopf, B.}
\newblock Kernel measures of conditional dependence.
\newblock In {\em Advances in neural information processing systems\/} (2008),
  pp.~489--496.

\bibitem{gretton2008kernel}
{\sc Gretton, A., Fukumizu, K., Teo, C.~H., Song, L., Sch{\"o}lkopf, B., and
  Smola, A.~J.}
\newblock A kernel statistical test of independence.
\newblock In {\em Advances in neural information processing systems\/} (2008),
  pp.~585--592.

\bibitem{hardt2016equality}
{\sc Hardt, M., Price, E., and Srebro, N.}
\newblock Equality of opportunity in supervised learning.
\newblock {\em arXiv preprint arXiv:1610.02413\/} (2016).

\bibitem{hartmann2018aquaplaning}
{\sc Hartmann, B., Raste, T., Kretschmann, M., Amthor, M., Schneider, F., and
  Denzler, J.}
\newblock Aquaplaning - a potential hazard also for automated driving.
\newblock In {\em ITS automotive nord e.V. (Hrsg.), Braunschweig\/} (2018).

\bibitem{he2016deep}
{\sc He, K., Zhang, X., Ren, S., and Sun, J.}
\newblock Deep residual learning for image recognition.
\newblock In {\em Proceedings of the IEEE conference on computer vision and
  pattern recognition\/} (2016), pp.~770--778.

\bibitem{kim2019learning}
{\sc Kim, B., Kim, H., Kim, K., Kim, S., and Kim, J.}
\newblock Learning not to learn: Training deep neural networks with biased
  data.
\newblock In {\em Proceedings of the IEEE conference on computer vision and
  pattern recognition\/} (2019), pp.~9012--9020.

\bibitem{kingma2014adam}
{\sc Kingma, D.~P., and Ba, J.}
\newblock Adam: A method for stochastic optimization.
\newblock {\em arXiv preprint arXiv:1412.6980\/} (2014).

\bibitem{lakkaraju2016discovering}
{\sc Lakkaraju, H., Kamar, E., Caruana, R., and Horvitz, E.}
\newblock Discovering blind spots of predictive models: Representations and
  policies for guided exploration.
\newblock {\em arXiv preprint arXiv 1610\/} (2016).

\bibitem{Li2019kernel}
{\sc Li, Z., Perez-Suay, A., Camps-Valls, G., and Sejdinovic, D.}
\newblock Kernel dependence regularizers and gaussian processes with
  applications to algorithmic fairness.
\newblock {\em arXiv preprint arXiv:1911.04322\/} (2019).

\bibitem{loshchilov2016sgdr}
{\sc Loshchilov, I., and Hutter, F.}
\newblock Sgdr: Stochastic gradient descent with warm restarts.
\newblock {\em arXiv preprint arXiv:1608.03983\/} (2016).

\bibitem{muckatira2020properties}
{\sc Muckatira, S.}
\newblock Properties of winning tickets on skin lesion classification.
\newblock {\em arXiv preprint arXiv:2008.12141\/} (2020).

\bibitem{murdoch2018beyond}
{\sc Murdoch, W.~J., Liu, P.~J., and Yu, B.}
\newblock Beyond word importance: Contextual decomposition to extract
  interactions from lstms.
\newblock {\em arXiv preprint arXiv:1801.05453\/} (2018).

\bibitem{pearl2009causality}
{\sc Pearl, J.}
\newblock {\em Causality}.
\newblock Cambridge university press, 2009.

\bibitem{perez2018data}
{\sc Perez, F., Vasconcelos, C., Avila, S., and Valle, E.}
\newblock Data augmentation for skin lesion analysis.
\newblock In {\em OR 2.0 Context-Aware Operating Theaters, Computer Assisted
  Robotic Endoscopy, Clinical Image-Based Procedures, and Skin Image Analysis}.
  Springer, 2018, pp.~303--311.

\bibitem{perez2017fair}
{\sc P{\'e}rez-Suay, A., Laparra, V., Mateo-Garc{\'\i}a, G.,
  Mu{\~n}oz-Mar{\'\i}, J., G{\'o}mez-Chova, L., and Camps-Valls, G.}
\newblock Fair kernel learning.
\newblock In {\em Joint European Conference on Machine Learning and Knowledge
  Discovery in Databases\/} (2017), Springer, pp.~339--355.

\bibitem{reimers2020deep}
{\sc Reimers, C., and Requena-Mesa, C.}
\newblock Deep learning--an opportunity and a challenge for geo-and
  astrophysics.
\newblock In {\em Knowledge Discovery in Big Data from Astronomy and Earth
  Observation}. Elsevier, 2020, pp.~251--265.

\bibitem{reimers2020determining}
{\sc Reimers, C., Runge, J., and Denzler, J.}
\newblock Determining the relevance of features for deep neural networks.
\newblock In {\em European Conference on Computer Vision (ECCV)\/} (2020).

\bibitem{rieger2020interpretations}
{\sc Rieger, L., Singh, C., Murdoch, W., and Yu, B.}
\newblock Interpretations are useful: Penalizing explanations to align neural
  networks with prior knowledge.
\newblock In {\em The 37th International Conference on Machine Learning (ICML
  2020)\/} (2020).

\bibitem{sarmanov1958maximum}
{\sc Sarmanov, O.~V.}
\newblock The maximum correlation coefficient (symmetrical case).
\newblock In {\em Doklady Akademii Nauk\/} (1958), vol.~120, Russian Academy of
  Sciences, pp.~715--718.

\bibitem{simonyan2014very}
{\sc Simonyan, K., and Zisserman, A.}
\newblock Very deep convolutional networks for large-scale image recognition.
\newblock {\em arXiv preprint arXiv:1409.1556\/} (2014).

\bibitem{torralba2011unbiased}
{\sc Torralba, A., and Efros, A.~A.}
\newblock Unbiased look at dataset bias.
\newblock In {\em CVPR 2011\/} (2011), IEEE, pp.~1521--1528.

\bibitem{tschandl2019comparison}
{\sc Tschandl, P., Codella, N., Akay, B.~N., Argenziano, G., Braun, R.~P.,
  Cabo, H., Gutman, D., Halpern, A., Helba, B., Hofmann-Wellenhof, R., et~al.}
\newblock Comparison of the accuracy of human readers versus machine-learning
  algorithms for pigmented skin lesion classification: an open, web-based,
  international, diagnostic study.
\newblock {\em The lancet oncology 20}, 7 (2019), 938--947.

\bibitem{tschandl2018ham10000}
{\sc Tschandl, P., Rosendahl, C., and Kittler, H.}
\newblock The ham10000 dataset, a large collection of multi-source
  dermatoscopic images of common pigmented skin lesions.
\newblock {\em Scientific data 5\/} (2018), 180161.

\bibitem{wang2020revise}
{\sc Wang, A., Narayanan, A., and Russakovsky, O.}
\newblock Revise: A tool for measuring and mitigating bias in visual datasets.
\newblock In {\em Proceedings of the European Conference on Computer Vision
  (ECCV)\/} (2020).

\bibitem{wang2020classes}
{\sc Wang, H., Shen, T., Zhang, W., Duan, L.-Y., and Mei, T.}
\newblock Classes matter: A fine-grained adversarial approach to cross-domain
  semantic segmentation.
\newblock In {\em Proceedings of the European Conference on Computer Vision
  (ECCV)\/} (2020).

\bibitem{zhang2018mitigating}
{\sc Zhang, B.~H., Lemoine, B., and Mitchell, M.}
\newblock Mitigating unwanted biases with adversarial learning.
\newblock In {\em Proceedings of the 2018 AAAI/ACM Conference on AI, Ethics,
  and Society\/} (2018), pp.~335--340.

\end{thebibliography}

\clearpage
\begin{appendices}
    \section{Detailed Steps to arrive at (11) and (12)} \label{sec:ape:calc}
    In this section we provide very detailed calculations on how to arrive at 
    equations (11) and (12).

    By definition of the partial correlation, we find
    \begin{equation}\label{eqn:first}
        \langle F^{\ast}(I), B \rangle | L = \langle F^{\ast}(I) - 
        \widehat{F^{\ast}_I}(L), B - \widehat{B}(L) \rangle.
    \end{equation}
    Here, $\widehat{F^{\ast}_I}(L)$ is the best linear regression of $F^{\ast}$ 
    given $L$ and $\widehat{B}(L)$ is the best linear regression of $B$ given $L$.

    Now we use the formula to calculate the linear regression coefficient and 
    write the best linear regression as the multiplication of the coefficient and 
    the value $L$ to get
    \begin{equation}
        \widehat{F^{\ast}_I}(L) = \frac{\langle F^{\ast}(I), L \rangle}
        {\langle L, L \rangle} L
    \end{equation}
    and 
    \begin{equation}
        \hat{B}(L) = \frac{ \langle B, L \rangle}{\langle L, L \rangle} L.
    \end{equation}
    We plug these results into (\ref{eqn:first}) to get
    \begin{equation}\label{eqn:paper11}
        \begin{split}
            \langle F^{\ast}(I)&, B \rangle | L = \\ 
            & \langle F^{\ast}(I) - \frac{\langle F^{\ast}(I), L \rangle} 
                {\langle L, L \rangle} L, B - \frac{ \langle B, L \rangle}{
                \langle L, L \rangle} L \rangle.
        \end{split}
    \end{equation}
    We expand the scalar product to find 
    \begin{equation}
        \begin{split}
            \langle &F^{\ast}(I), B \rangle | L = \\
                &\langle F^{\ast}(I), B \rangle 
            - \langle F^{\ast}(I), \frac{ \langle B, L \rangle}{\langle L, L 
                \rangle} L \rangle\\
            &- \langle \frac{\langle F^{\ast}(I), L \rangle} {\langle L, L 
                \rangle} L, B \rangle
            + \langle \frac{\langle F^{\ast}(I), L \rangle} {\langle L, L 
                \rangle} L, \frac{ \langle B, L \rangle}{\langle L, L \rangle} L 
                \rangle\\
            & = S_1 - S_2 - S_3 + S_4. 
        \end{split}
    \end{equation}

    Simplifying $S_2$ and $S_3$ these terms individually by pulling the fraction 
    out of the scalar product. We get 
    \begin{equation}
        S_2 = \frac{\langle F^{\ast}(I), L \rangle \langle B, L \rangle}{\langle
            L, L \rangle}
    \end{equation}
    and 
    \begin{equation}
        S_3 = \frac{ \langle B, L \rangle\langle F^{\ast}(I), L \rangle}{\langle
            L, L \rangle}.
    \end{equation}
    For $S_4$ we pull both fractions out of the scalar product and reduce the 
    resulting fraction by $\langle L, L \rangle$ to get
    \begin{equation}
        S_4 = \frac{\langle F^{\ast}(I), L \rangle \langle B, L \rangle}{\langle
            L, L \rangle^2} \langle L, L \rangle =  \frac{ \langle B, L \rangle
            \langle F^{\ast}(I), L \rangle}{\langle
            L, L \rangle}.
    \end{equation}
    As we find that
    \begin{equation}
        S_2 = S_3 = S_4
    \end{equation}
    we conclude that 
    \begin{equation}\label{eqn:10}
        \begin{split}
            \langle F^{\ast}(I), B \rangle | L &= S_1 - S_2 \\
            & = \langle F^{\ast}(I), B \rangle - \frac{\langle F^{\ast}(I), L 
                \rangle \langle B, L \rangle}{\langle L, L \rangle}.
        \end{split}
    \end{equation}

    Now we substitute $L$ by $\zeta_1S$ (definition in (6) of the paper), 
    $F^{\ast}(I)$ by $\zeta_1S$ (definition in (9) of the paper) and $B$ by
    $\alpha_1S + \alpha_2B^{\ast}$ (definition in (5) of the paper) to get:
    \begin{equation}
        \begin{split}
            \langle F^{\ast}(I), B \rangle &= \langle \zeta_1S, \alpha_1S + 
                \alpha_2B^{\ast} \rangle\\ 
            &= \zeta_1\alpha_1 \langle S, S \rangle + \zeta_1\alpha_2\langle S, 
                B^{\ast} \rangle.
        \end{split}
    \end{equation}
    Since $B^{\ast}$ and $S$ are independent by definition of $B^{\ast}$, we can
    evaluate $\langle S, B^{\ast} \rangle = 0$ and find
    \begin{equation}\label{eqn:12}
        \langle F^{\ast}(I), B \rangle = \zeta_1\alpha_1\langle S, S \rangle.
    \end{equation}
    Similar we find 
    \begin{equation}\label{eqn:paper121}
        \begin{split}
            &\frac{\langle F^{\ast}(I), L \rangle \langle B, L \rangle}{\langle 
                L, L \rangle}\\ 
            & \quad \quad = \frac{\langle\zeta_1S, \zeta_1S\rangle\langle 
                \alpha_1S + \alpha_2B^{\ast}, \zeta_1S \rangle}{\langle`
                \zeta_1S, \zeta_1S \rangle}\\ 
            & \quad \quad = \frac{\zeta_1^2\langle S, S \rangle(\alpha_1\zeta_1 
                \langle S, S \rangle + \alpha_2\zeta_1\langle B^{\ast}, S 
                \rangle)}{\zeta_1^2 \langle S,S \rangle}.
        \end{split}
    \end{equation}
    We use the fact that $\langle S, B^{\ast} \rangle = 0$ by definition of 
    $B^{\ast}$ to find 
    \begin{equation}
        \frac{\langle F^{\ast}(I), L \rangle \langle B, L \rangle}{\langle L, L 
        \rangle} = \frac{\zeta_1^2\langle S, S \rangle \alpha_1\zeta_1 \langle 
        S, S \rangle}{\zeta_1^2 \langle S,S \rangle}.
    \end{equation}
    Reducing this fraction by $\zeta_1^2\langle S, S \rangle$ leads to
    \begin{equation}\label{eqn:15}
        \frac{\langle F^{\ast}(I), L \rangle \langle B, L \rangle}{\langle L, L 
        \rangle} = \alpha_1\zeta_1 \langle S, S \rangle.
    \end{equation}
    Finally, we include the results of (\ref{eqn:12}) and (\ref{eqn:15}) into 
    (\ref{eqn:10}) to arrive at
    \begin{equation}
        \langle F^{\ast}(I), B \rangle | L = \alpha_1\zeta_1 \langle S, S 
        \rangle - \alpha_1\zeta_1 \langle S, S \rangle = 0.
    \end{equation}

    For the convenience of the reader we want to point out that 
    (\ref{eqn:paper11}) corresponds to (11) in the paper, (\ref{eqn:paper121})
    corresponds to the first line of (12). 

    \section{Evaluation with Methods of Algorithmic Fairness}
    \label{sec:ape:fair}
    During all runs of all methods we measured whether the classifier reached 
    ``equalized odds'' \cite{hardt2016equality} and ``demographic parity'' 
    \cite{dwork2012fairness}. Since this requires an independence test, we 
    report results tested with HSIC, RCOT and partial correlation (PC). The 
    level of significance is 0.01. The results of these tests for the experiment 
    on synthetic data can be seen in Table~\ref{tab:ape:eo}. The results for the
    experiments on real data can be found Table~\ref{tab:ape:eo_real}.

    \begin{table*}
        \caption{In this table we report how many out of the 100 runs resulted
        in a classifier that fulfills ``equalized odds''}
        \label{tab:ape:eo}
        \centering
        \begin{tabular}{lcccccc}
            \toprule
             & \multicolumn{3}{c}{Setup~I} &  \multicolumn{3}{c}{Setup~II}\\
             \cmidrule(lr){2-4}
             \cmidrule(lr){5-7}
            Method & HSIC & RCOT & PC & HSIC & RCOT & PC\\
            \midrule
            Baseline    & 0/100 & 0/100 & 0/100 & 100/100 & 99/100 & 100/100 \\
            Adeli et al. \cite{adeli2019bias} & 2/100 & 4/100 & 0/100 & 100/100 & 99/100 & 100/100 \\
            Zhang et al. I \cite{zhang2018mitigating} & 0/100 & 1/100 & 0/100 & 99/100 & 92/100 & 100/100 \\
            Zhang et al. II \cite{zhang2018mitigating}  & 1/100 & 1/100 & 0/100 & 100/100 & 97/100 & 100/100 \\
            Kim et al. \cite{kim2019learning}           & 2/100 & 0/100 & 0/100 & 0/100 & 0/100 & 100/100 \\
            \midrule
            Ours(MI) & 0/100 & 0/100 & 0/100 & 100/100 & 100/100 & 100/100 \\
            Ours(HSIC) & 0/100 & 0/100 & 0/100 & 100/100 & 99/100 & 100/100 \\
            Ours(MCC) & 0/100 & 0/100 & 0/100 & 99/100 & 100/100 & 100/100 \\
            \bottomrule
        \end{tabular}
    \end{table*}

\begin{table*}[th]
    \caption{Experimental results on the cats and dogs dataset. All methods were trained
    on a dataset in which p\% of all dogs are dark-furred dogs and $p$\% of all cats are light-furred.
    The first column of the table indicates the fraction $p$. The following columns
    contain the accuracies on an unbiased test set averaged over three runs. 
    We also report the standard error}
    \label{tab:ape:eo_real}

    \centering
    \begin{tabular}{ccccccc}
        \toprule
        Fraction & Baseline& Adeli et al. \cite{adeli2019bias}  & 
            Zhang et al. I \cite{zhang2018mitigating} & Zhang et al. II \cite{zhang2018mitigating} & Ours(HSIC) & HSIC -- uncon \\
        \midrule
        \multicolumn{7}{c}{HSIC}\\
        \cmidrule(lr){1-7}
        0\%     & 0/3 & 0/3 & 0/3 & 0/3 & 0/3 & 0/3 \\
        10\%    & 0/3 & 0/3 & 0/3 & 0/3 & 0/3 & 0/3 \\        
        20\%    & 0/3 & 0/3 & 0/3 & 0/3 & 0/3 & 0/3 \\       
        30\%    & 0/3 & 0/3 & 0/3 & 0/3 & 1/3 & 0/3 \\
        40\%    & 3/3 & 3/3 & 3/3 & 3/3 & 3/3 & 3/3 \\
        50\%    & 3/3 & 1/3 & 3/3 & 3/3 & 2/3 & 2/3 \\
        60\%    & 1/3 & 0/3 & 1/3 & 0/3 & 0/3 & 0/3 \\
        70\%    & 0/3 & 0/3 & 0/3 & 0/3 & 0/3 & 0/3 \\
        80\%    & 0/3 & 0/3 & 0/3 & 0/3 & 0/3 & 0/3 \\
        90\%    & 0/3 & 0/3 & 0/3 & 0/3 & 0/3 & 0/3 \\
        100\%   & 0/3 & 0/3 & 0/3 & 0/3 & 0/3 & 0/3 \\
        \\
        \multicolumn{7}{c}{RCOT}\\
        \cmidrule(lr){1-7}
        0\%     & 0/3 & 0/3 & 0/3 & 0/3 & 0/3 & 0/3 \\
        10\%    & 0/3 & 0/3 & 0/3 & 0/3 & 0/3 & 0/3 \\        
        20\%    & 0/3 & 0/3 & 0/3 & 0/3 & 0/3 & 0/3 \\       
        30\%    & 0/3 & 0/3 & 0/3 & 0/3 & 0/3 & 0/3 \\
        40\%    & 0/3 & 1/3 & 0/3 & 0/3 & 0/3 & 1/3 \\
        50\%    & 3/3 & 3/3 & 3/3 & 3/3 & 3/3 & 3/3 \\
        60\%    & 3/3 & 3/3 & 3/3 & 3/3 & 3/3 & 3/3 \\
        70\%    & 0/3 & 2/3 & 2/3 & 3/3 & 2/3 & 2/3 \\
        80\%    & 0/3 & 0/3 & 0/3 & 0/3 & 0/3 & 0/3 \\
        90\%    & 0/3 & 0/3 & 0/3 & 0/3 & 0/3 & 0/3 \\
        100\%   & 0/3 & 0/3 & 0/3 & 0/3 & 0/3 & 0/3 \\
        \bottomrule
    \end{tabular}
\end{table*}

    \section{List of Hyperparameters for different Methods}
    \label{sec:ape:hyp}
    In this section, we report the hyperparameters of the methods used. The 
    parameters were optimized via grid-search. For all methods, we report the
    learning rate for the classifier ($lr_c$) and the number of epochs (Nr. 
    Ep.). For all methods other than the baseline, we report $\beta$, the 
    weight of the adversarial debiasing loss. If a second neural network is 
    trained to calculate this loss, we report the learning rate for this network
    by $lr_b$. We further report whether we forced the mini batches to be 
    balanced (Bal.).

    \begin{table*}
        \caption{The hyperparameters for all methods and all experiments}
        \label{tab:ape:hp}
        \centering
        \begin{tabular}{lccccc}
            Method & $lr_c$ & Nr. Ep. & $\beta$ & $lr_b$ & Bal.\\
            \midrule
            & \multicolumn{5}{c}{Setup~I} \\ 
            \cmidrule(lr){2-6}
            Baseline    & $3$e$-5$ & 30     & --    & -- & False \\
            Adeli et al. \cite{adeli2019bias}    
                        & $3$e$-4$ & 1000   & 1    & $3$e$-4$ & False \\
            Zhang et al. I \cite{zhang2018mitigating}
                        & $3$e$-3$ & 30     & 2     & $3$e$-3$ & False \\
            Zhang et al. II \cite{zhang2018mitigating}
                        & $3$e$-3$ & 30     & 0.5   & $3$e$-3$ & False \\
            Kim et al. \cite{kim2019learning}
                        & $3$e$-4$ & 1000   & 1     & $3$e$-4$ & False \\
            Ours(MI)
                        & $1$e$-5$ & 100    & 0.0625& $1$e$-5$ & False \\
            Ours(MCC)
                        & $3$e$-5$ & 30     & 0.0625& $3$e$-5$ & False \\
            Ours(HSIC)
                        & $3$e$-5$ & 30     & 0.0625& $3$e$-5$ & True \\
            Unconditional HSIC
                        & $3$e$-5$ & 30     & 0.003 & $3$e$-5$ & True \\
            Unconditional MI
                        & $1$e$-5$ & 100    & 0.0625& $1$e$-5$ & False \\
            Ours(MCC) -- only MCC
                        & $3$e$-4$ & 1000    & 1 & $3$e$-4$ & False \\
            Ours(MCC) -- only PC
                        & $3$e$-5$ & 30    & 0.0625  & $3$e$-5$ & False \\
            \\
            & \multicolumn{5}{c}{Setup~II} \\ 
            \cmidrule(lr){2-6}
             Baseline   & $3$e$-5$ & 100    & --    & -- & False \\
            Adeli et al. \cite{adeli2019bias}    
                        & $3$e$-5$ & 100    & 1    & $3$e$-5$ & False \\
            Zhang et al. I \cite{zhang2018mitigating}
                        & $3$e$-3$ & 30     & 0.5   & $3$e$-3$ & False \\
            Zhang et al. II \cite{zhang2018mitigating}
                        & $3$e$-3$ & 30     & 0.5   & $3$e$-3$ & False \\
            Kim et al. \cite{kim2019learning}
                        & $3$e$-5$ & 100    & 1     & $3$e$-5$ & False \\
            Ours(MI)
                        & $1$e$-5$ & 100    & 0.05  & $1$e$-5$ & False \\
            Ours(MCC)
                        & $3$e$-5$ & 30     & 0.0625  & $3$e$-5$ & False \\
            Ours(HSIC)
                        & $3$e$-5$ & 30     & 0.0625& $3$e$-5$ & True \\
            Unconditional HSIC
                        & $3$e$-5$ & 30     & 0.0625& $3$e$-5$ & True \\
            Unconditional MI
                        & $1$e$-5$ & 100    & 0.05& $1$e$-5$ & False \\
            Ours(MCC) -- only MCC
                        & $3$e$-5$ & 30    & 0.0625  & $3$e$-5$ & False \\
            Ours(MCC) -- only PC
                        & $3$e$-5$ & 30    & 0.0625  & $3$e$-5$ & False \\
                        \\
            & \multicolumn{5}{c}{Real-World Data} \\
            \cmidrule(lr){2-6}
            Baseline    & $1$e$-2$ & 150    & --    & -- & False \\
            Adeli et al. \cite{adeli2019bias}    
                        & $1$e$-2$ & 150    & 1    & $3$e$-4$ & False \\
            Zhang et al. I \cite{zhang2018mitigating}
                        & $1$e$-2$ & 150    & 1     & $3$e$-5$ & False \\
            Zhang et al. II \cite{zhang2018mitigating}
                        & $1$e$-2$ & 150    & 1     & $3$e$-5$ & False \\
            Ours(HSIC)
                        & $1$e$-2$ & 150    & 1     & $1$e$-5$ & False\\
            Unconditional HSIC
                        & $1$e$-2$ & 150    & 1     & $1$e$-5$ & False\\
            \bottomrule
        \end{tabular}
    \end{table*}

\end{appendices}
\end{document}